\renewcommand{\qedsymbol}{\hfill$\blacksquare$}
\newcommand*{\defeq}{\mathrel{\rlap{%
                     \raisebox{0.3ex}{$\m@th\cdot$}}%
                     \raisebox{-0.3ex}{$\m@th\cdot$}}%
                     =}
\newtheorem{Procedure}[theorem]{Procedure}
\newcommand{\newexample}[2]{\hskip -\marginparsep {\bf Example #1.} {\it #2}}
\newcommand{\examplerevisited}[2]{\hskip -\marginparsep {\bf Example #1 Revisited.} {\it #2}}
\newcommand{\myremark}[3]{\hskip -\marginparsep {\bf Remark #1 ({\bf #2}).} {\it #3}}
\newcommand{\RMB}{\mathrm{B}}
\newcommand{\RMR}{\mathrm{R}}
\newcommand{\RMT}{\mathrm{T}}
\newcommand{\CF}{\mathcal{F}}
\newcommand{\CI}{\mathcal{I}}
\newcommand{\CL}{\mathcal{L}}
\newcommand{\CQ}{\mathcal{Q}}
\newcommand{\CR}{\mathcal{R}}
\newcommand{\CS}{\mathcal{S}}
\newcommand{\CT}{\mathcal{T}}
\newcommand{\BFB}{\mathbf{B}}
\newcommand{\BFR}{\mathbf{R}}
\newcommand{\BFT}{\mathbf{T}}
\newcommand{\BFW}{\mathbf{W}}
\newcommand{\BBN}{\mathbb{N}}
\newcommand{\BBT}{\mathbb{T}}
\newcommand{\Next}{\mathbf{X}}
\newcommand{\Always}{\mathbf{G}}
\newcommand{\Event}{\mathbf{F}}
\newcommand{\Until}{\mathcal{U}}
\newcommand{\Implies}{\Rightarrow}
\newcommand{\notltl}{\neg}
\newcommand{\andltl}{\wedge}
\newcommand{\orltl}{\vee}
\newcommand{\prop}{\mathtt{p}}
\newcommand{\opt}{\pi}
\newcommand{\optrun}{\textsc{Optimal-Run}\ }
\newcommand{\buchi}{B\"uchi\ }
\newcommand{\ie}{{\it i.e.},~}
\begin{document}

\title*{Optimal Multi-Robot Path Planning with LTL Constraints: Guaranteeing Correctness Through Synchronization}
\titlerunning{Optimal Multi-Robot Path Planning with LTL Constraints and Synchronization}
\author{Alphan Ulusoy, Stephen L. Smith, and Calin Belta}
\institute{Alphan Ulusoy, Calin Belta \at Boston University, Boston, MA, USA \email{alphan@bu.edu, cbelta@bu.edu}
\and Stephen L. Smith \at University of Waterloo, Waterloo, ON, Canada \email{stephen.smith@uwaterloo.ca}}
\maketitle

\abstract{
In this paper, we consider the automated planning of optimal paths for a robotic team satisfying a high level mission specification.
Each robot in the team is modeled as a weighted transition system where the weights have associated deviation values that capture the non-determinism in the traveling times of the robot during its deployment.
The mission is given as a Linear Temporal Logic (LTL) formula over a set of propositions satisfied at the regions of the environment.
Additionally, we have an optimizing proposition capturing some particular task that must be repeatedly completed by the team.
The goal is to minimize the maximum time between successive satisfying instances of the optimizing proposition while guaranteeing that the mission is satisfied even under non-deterministic traveling times.
Our method relies on the communication capabilities of the robots to guarantee correctness and maintain performance during deployment.
After computing a set of optimal satisfying paths for the members of the team, we also compute a set of synchronization sequences for each robot to ensure that the LTL formula is never violated during deployment.
We implement and experimentally evaluate our method considering a persistent monitoring task in a road network environment.
}

\section{Introduction}
\label{sec:intro}
Temporal logics~\cite{EAE:1990}, such as Linear Temporal Logic (LTL) and Computation Tree Logic (CTL), are extensions of propositional logic that can capture temporal relations.
Even though temporal logics have been used in model checking of finite systems~\cite{CB-JPK:2008} for quite some time, they have gained popularity as a means for specifying complex mission requirements in path planning and control synthesis problems only recently~\cite{PT-GJP:2006, MK-CB:2008, TW-UT-RMM:2010}.
Existing work on path planning and control synthesis concentrates on LTL specifications for finite state systems, which may be abstractions of their infinite counterparts~\cite{PT-GJP:2006,JT-BY-CB-IC-JB:2010}.
Particularly, given the system model and the mission specification expressed in some temporal logic, satisfying paths and corresponding control strategies can be computed automatically through a search of the state space for deterministic~\cite{MK-CB:2010}, non-deterministic~\cite{WT:2002,JT-BY-CB-IC-JB:2010,MK-CB:2008,HKG-GF-GFP:2007} and probabilistic systems~\cite{AB-LDA:1995,MK-GN-DP:2002,XCD-SLS-CB-DR:2011}.

However, more often than not, there are multiple paths that can satisfy a given mission specification.
In that case, one generally wants to be able to pick the path that is superior to others with respect to some metric, such as safety, speed, cost, etc.
In our previous work, we focused on mission specifications given in LTL along with a particular cost function, and proposed an automated method for finding optimal robot paths that satisfy the mission and minimize the cost function for a single robot~\cite{SLS-JT-CB-DR:2011}. 
Next, we extended this approach to multi-robot teams by utilizing an abstraction based on timed automata~\cite{AU-SLS-XCD-CB-DR:2011.b}.
Then, we proposed a robust method that could accomodate uncertainties in the traveling times of robots with limited communication capabilities~\cite{AU-SLS-XCD-CB:2011}.

Extending the optimal path planning problem from a single robot to multiple robots is not trivial, as the joint asynchronous motion of all members of the team must be captured in a finite model.
In~\cite{MK-CB:2010}, the authors propose a method for decentralized motion of multiple robots subject to LTL specifications.
Their method, however, results in sub-optimal performance as it requires the robots to travel synchronously, blocking the execution of the mission before each transition until all robots are synchronized.
The vehicle routing problem (VRP)~\cite{PT-DV:2001} and its extensions to more general classes of temporal constraints~\cite{SK-EF:2008.b,SK-EF:2008} also deal with finding optimal satisfying paths for a given specification.
In~\cite{SK-EF:2008}, the authors consider optimal vehicle routing with metric temporal logic specifications by converting the problem to a mixed integer linear program (MILP).
However, their method does not apply to the missions where robots must repeatedly complete some task, as it does not allow for specifications of the form ``always eventually''.
Furthermore, none of these methods are robust to timing errors that can occur during deployment, as they rely on the robots' ability to follow generated trajectories exactly for satisfaction of the mission specification.

In~\cite{AU-SLS-XCD-CB-DR:2011.b}, we proposed a method that uses timed automata to capture the joint asynchronous motion of the members of the robotic team in the environment.
After providing a bisimulation~\cite{RM:1989} of an infinite-dimensional timed automaton to a finite dimensional transition system, we applied our results from~\cite{SLS-JT-CB-DR:2011} to compute an optimal satisfying run.
However, multi-robot paths found using this method are implementable only if the traveling times of the robots during deployment exactly match the traveling times used for planning.
Otherwise, the order of events may switch resulting in the violation of the mission specification during deployment.
In~\cite{AU-SLS-XCD-CB:2011}, we addressed this issue for robots operating under communication constraints that limit their communication capabilities to a subset of regions.
We showed that a trace-closed mission specification will never be violated due to uncertainties in the speeds of the robots.
Then, we proposed a synchronization protocol to maintain and characterize the field performance of the robotic team.

The methods given in ~\cite{AU-SLS-XCD-CB-DR:2011.b} and~\cite{AU-SLS-XCD-CB:2011} are actually two extremes:
In~\cite{AU-SLS-XCD-CB-DR:2011.b}, the robots can follow the generated trajectories exactly and do not communicate at all, while in~\cite{AU-SLS-XCD-CB:2011} the robots' traveling times during deployment deviate from those used in planning, and they cannot communicate freely.
In this paper, we address the middle between these two extremes: the robots cannot follow the generated trajectories exactly, but they can communicate regardless of their positions in the environment.
Thus, after obtaining an optimal satisfying run of the team, we compute synchronization sequences that leverage the communication capabilities of the robots to robustify the planned trajectory against deviations in traveling times.

The main contribution of this paper is threefold.
First, we provide an algorithm to capture the joint asynchronous behavior of a team of robots modeled as transition systems in a single transition system.
This team transition system is provably more compact than the approach based on timed automata that we previously proposed in \cite{AU-SLS-XCD-CB-DR:2011.b}.
Second, for a satisfying run made up of a finite length prefix and an infinite length cyclic suffix, we propose a synchronization protocol and an algorithm to compute synchronization sequences that guarantee correctness under non-deterministic traveling times that may be observed during deployment.
Finally, we provide an automated framework that leverages these two methods along with the \optrun algorithm previously proposed in \cite{SLS-JT-CB-DR:2011} to solve the multi-robot optimal path planning problem with robustness guarantees.
Our experiments show that the computed runs and synchronization sequences indeed provide robustness to uncertainties in traveling times that may occur during the deployment of the team.

The rest of the paper is organized as follows:
In Sec.~\ref{sec:prelim}, we provide some definitions and preliminaries in formal methods.
In Sec.~\ref{sec:prob}, we formulate the optimal and robust multi-robot path planning problem and give an outline of our approach.
We provide a complete solution to this problem in Sec.~\ref{sec:sol}.
In Sec.~\ref{sec:exp}, we present experiments involving a team of robots performing a persistent surveillance mission in a road network environment.
Finally, in Sec.~\ref{sec:conc}, we conclude with final remarks.

\section{Preliminaries}
\label{sec:prelim}
In this section, we introduce the notations that we use in the rest of the paper and briefly review some concepts related to automata theory, LTL, and formal verification.
For a more rigorous treatment of these topics, we refer the interested reader to~\cite{EMC-DP-OG:1999,JEH-RM-JDU:2007,CB-JPK:2008} and references therein.

For a set $\Sigma$, we use $|\Sigma|$, $2^\Sigma$, $\Sigma^*$, and $\Sigma^\omega$ to denote its cardinality, power set, set of finite words, and set of infinite words, respectively.
We define $\Sigma^\infty = \Sigma^* \cup \Sigma^\omega$ and denote the empty string by $\emptyset$.

\begin{definition}[\bf Transition System]
\label{def:ts}
A (weighted) transition system (TS) is a tuple $\BFT := (\CQ_\RMT, q_\RMT^0, \delta_\RMT, \Pi_\RMT, \CL_\RMT, w_\RMT)$, where
\begin{enumerate}
\item $\CQ_\RMT$ is a finite set of states; %
\item $q_\RMT^0 \in \CQ_\RMT$ is the initial state; %
\item $\delta_\RMT \subseteq \CQ_\RMT \times \CQ_\RMT$ is the transition relation; %
\item $\Pi_\RMT$ is a finite set of atomic propositions; %
\item $\CL_\RMT:\CQ_\RMT\to 2^{\Pi_\RMT}$ is a map giving the set of atomic propositions satisfied in a state; %
\item $w_\RMT: \delta_{T}\to \BBN_{>0}$ is a map that assigns a positive integer weight to each transition.
\end{enumerate}
\end{definition}

We define a run of $\BFT$ as an infinite sequence of states $r_\RMT = q^0,q^1,\ldots$ such that $q^0 = q^{0}_{T}$, $q^k \in \CQ_\RMT$ and $(q^k,q^{k+1}) \in \delta_\RMT$ for all $k \geq 0$.
A run generates an infinite word $\omega_\RMT = \CL(q^0),\CL(q^1),\ldots$ where $\CL(q^k)$ is the set of atomic propositions satisfied at state $q^k$.

In this work, we consider mission specifications expressed in Linear Temporal Logic (LTL)~\cite{CB-JPK:2008,EMC-DP-OG:1999}.
Informally, an LTL formula over the set $\Pi$ of atomic propositions may contain boolean operators $\notltl$ (negation), $\orltl$ (disjunction) and $\andltl$ (conjunction), and temporal operators $\Next$ (next), $\Until$ (until), $\Event$ (eventually) and $\Always$ (globally/always).
LTL formulas are interpreted over infinite words (generated by the transition system $\BFT$ from Def.~\ref{def:ts}).
For instance, $\Next\,\prop$ states that at the next position of a word, proposition $\prop$ is true.
The formula $\prop_1\,\Until\,\prop_2$ states that there is a future position of the word when proposition $\prop_2$ is true, and proposition $\prop_1$ is true at least until $\prop_2$ is true.
The formula $\Always\,\prop$ states that $\prop$ is true at all positions of the word; the formula $\Event\,\prop$ states that $\prop$ eventually becomes true in the word.
More expressivity can be achieved by combining the temporal and boolean operators.
We say a run $r_\RMT$ satisfies $\phi$ if and only if the word generated by $r_\RMT$ satisfies $\phi$.
An LTL formula $\phi$ over a set $\Pi$ can be represented by a \emph{\buchi automaton}, which is defined next.

\begin{definition}[\bf \buchi Automaton]
\label{def:buchi}
  A \buchi automaton is a tuple $\BFB \defeq (\CQ_\RMB,\CQ_\RMB^0,\Sigma_\RMB,$ $\delta_\RMB,\CF_\RMB)$, consisting of %
\begin{enumerate}
\item a finite set of states $\CQ_\RMB$; %
\item a set of initial states $\CQ_\RMB^0\subseteq \CQ_\RMB$; %
\item an input alphabet $\Sigma_\RMB$; %
\item a non-deterministic transition relation $\delta_\RMB \subseteq \CQ_\RMB\times \Sigma_\RMB \times \CQ_\RMB$; %
\item a set of accepting (final) states $\CF_\RMB\subseteq \CQ_\RMB$.
\end{enumerate}
\end{definition}

A \emph{run} of $\BFB$ over an input word $\omega=\omega^0,\omega^1,\ldots$ is a sequence $r_\RMB=q^0,q^1,\ldots$, such that $q^0 \in \CQ_\RMB^0$, and $( q^k,\omega^k,q^{k+1}) \in \delta_\RMB$, for all $k\geq 0$.
A \buchi automaton $\BFB$ accepts a word over $\Sigma_\RMB$ if and only if at least one of the corresponding runs intersects with $\CF_\RMB$ infinitely many times.
For any LTL formula $\phi$ over a set $\Pi$, one can construct a \buchi automaton with input alphabet $\Sigma_\RMB = 2^{\Pi}$ accepting all and only words over $2^\Pi$ that satisfy $\phi$.

\begin{definition}[\bf Prefix-Suffix Structure]
\label{def:prefix_suffix}
A prefix of a run is a finite path from an initial state to a state $q$.
A periodic suffix is an infinite run originating at the state $q$ reached by the prefix, and periodically repeating a finite path, which we call the suffix cycle, originating and ending at $q$.
A run is in prefix-suffix form if it consists of a prefix followed by a periodic suffix.
\end{definition}

\section{Problem Formulation and Approach}
\label{sec:prob}
In this section we introduce the multi-robot path planning problem with temporal constraints for robots with uncertain, but bounded traveling times.
Let
\begin{equation}
\label{eqn:graph}
\mathcal{E}=(V,\rightarrow_\mathcal{E})
\end{equation}
be a directed graph, where $V$ is the set of vertices and $\rightarrow_{\mathcal{E}}\subseteq V\times V$ is the set of edges.
We consider $\mathcal{E}$ as the quotient graph of a partitioned environment, where $V$ is the set of labels of the regions in the environment and $\rightarrow_{\mathcal{E}}$ is the corresponding adjacency relation.
For instance, $V$ can be a set of labels for the regions and intersections for a road network and $\rightarrow_\mathcal{E}$ can give their connections (see Fig.~\ref{fig:road_network}). 

Consider a team of $m$ robots moving in an environment modeled by $\mathcal E$.
The motion capabilities of robot $i,i=1,\ldots,m$ are modeled by a TS $\BFT_i = (\CQ_i,q_i^0,\delta_i,\Pi_i,$ $\CL_i,w_i)$, where $\CQ_{i}\subseteq V$; $q_i^0$ is the initial vertex of robot $i$; $\delta_i\subseteq \rightarrow_{\mathcal E}$ is a relation modeling the capability of robot $i$ to move among the vertices; $\Pi_i\subseteq\Pi$ is the subset of propositions that can be satisfied by robot $i$; $\CL_i$ is a mapping from $\CQ_i$ to $2^{\Pi_i}$ showing how the propositions are satisfied at vertices; and $w_i(q,q')$ gives the \emph{nominal} time for robot $i$ to go from vertex $q$ to $q'$, which we assume to be a positive integer.
However, due to the uncertainty in the traveling times of the robots, the \emph{actual} time it takes for robot $i$ to go from $q$ to $q'$, which we denote by $\tilde w_i(q,q')$, is a non-deterministic quantity that lies in the interval $[\underline{\rho_i}w_i(q,q'), \overline{\rho_i}w_i(q,q')]$, where $\underline{\rho_i},\overline{\rho_i}$ are the predetermined lower and upper \emph{deviation values} of robot $i$ that satisfy $0<\underline{\rho_i}\leq1\leq\overline{\rho_i}$.
In this model, robot $i$ travels along the edges of $\BFT_{i}$, and spends zero time at the vertices.
We also assume that the robots are equipped with motion primitives that allow them to deterministically move from $q$ to $q'$ for each $(q,q')\in \delta_i$, even though the time it takes to reach from $q$ to $q'$ is uncertain.
In the following, we use the expression ``\emph{in the field}'' to refer to the model with uncertain traveling times, and use $x$ and $\tilde x$ to denote the \emph{nominal} and \emph{actual} values of some variable $x$.

We consider the case where the robotic team has a mission in which some particular task must be repeatedly completed and the maximum time in between successive completions of this task must be minimized.
For instance, in a persistent data gathering mission, the global mission could be \emph{keep gathering data while obeying traffic rules at all times}, and the repeating task could be \emph{gathering data}.
For this example, the robots would operate according to the mission specification while ensuring that the maximum time in between any two successive data gatherings is minimized.
Consequently, we assume that there is an \emph{optimizing proposition} $\opt \in \Pi$ that corresponds to this particular repeating task and consider multi-robot missions specified by LTL formulae of the form
\begin{equation}
\label{eqn:general_formula}
\phi\defeq\varphi \land \Always\Event\opt,
\end{equation}
where $\varphi$ can be any LTL formula over $\Pi$, and $\Always\Event\opt$ means that the proposition $\opt$ must be repeatedly satisfied.
Our aim is to plan multi-robot paths that satisfy the mission $\phi$ and minimize the maximum time in between successive satisfying instances of $\opt$.

To state this problem formally, we assume that each run $r_i = q_i^0,q_i^1,\ldots$ of $\BFT_i$ (robot $i$) starts at $t=0$ and generates a word $\omega_i = \omega_i^0,\omega_i^1,\ldots$ and a corresponding sequence of time instances $\BBT_i \defeq t_i^0,t_i^1,\ldots$ such that $\omega_i^k = \CL_i(q_i^k)$ is satisfied at $t_i^k$.
To define the behavior of the team as a whole, we interpret the sequences $\BBT_{i}$ as sets and take the union $\bigcup_{i=1}^{m} \BBT_{i}$ and order this set in an ascending order to obtain the sequence $\BBT \defeq t^0,t^1,\ldots$.
Then, we define $\omega_{team} = \omega_{team}^0,\omega_{team}^1,\ldots$ to be the word generated by the team of robots where $\omega_{team}^k$ is the union of all propositions satisfied at $t^k$.
Finally, we define the infinite sequence $\BBT^\opt = \BBT^\opt(1),\BBT^\opt(2),\ldots$ where $\BBT^\opt(k)$ stands for the time instance when $\opt$ is satisfied for the $k^{th}$ time by the team and define the cost function 
\begin{equation}
\label{eqn:cost_function}
J(\BBT^{\opt})=\limsup_{i\to+\infty}\left(\mathbb{T}^{\opt}(i+1) - \mathbb{T}^{\opt}(i)\right).
\end{equation}
The form of the cost function given in Eq.~\eqref{eqn:cost_function} is motivated by persistent surveillance missions, where one is interested in the long-term behavior of the team.
Given a sequence $\BBT^\pi$ corresponding to some run of the team, the cost function in Eq.~\eqref{eqn:cost_function} captures the maximum time between satisfying instances of $\pi$ once the team behavior reaches a steady-state, which we achieve in finite time as we will discuss in Sec.~\ref{sec:sol.run}.
Thus, the problem becomes that of finding an optimal run of the team that satisfies $\phi$ and minimizes~\eqref{eqn:cost_function}.
However, the non-determinism in traveling times imposes two additional difficulties which directly follow from Prop.~3.2 in \cite{AU-SLS-XCD-CB:2011}:
First, if the traveling times observed during deployment deviate from those used in planning, then there exist missions that will be violated in the field.
Second, the worst case performance of the robotic team during deployment in terms of Eq.~\ref{eqn:cost_function} will be limited by that of a single member.

To guarantee correctness in the field, and limit the deviation of the performance of the team from the planned optimal run during deployment, we propose periodic synchronization of the robots.
Using this synchronization protocol, robots synchronize with each other according to pre-computed synchronization sequences $s_i, i=1,\ldots,m$ as they execute their runs $r_i, i=1,\ldots,m$ in the field.
We can now formulate the problem.
\begin{problem}
\label{prb:robust}
Given a team of $m$ robots modeled as transition systems $\BFT_{i}$, $i\!=\!1,\ldots,m$, and an LTL formula $\phi$ over $\Pi$ in the form \eqref{eqn:general_formula}, synthesize individual runs $r_i$ and synchronization sequences $s_i$ for each robot such that $\BBT^\opt$ minimizes the cost function \eqref{eqn:cost_function}, and $\tilde\omega_{team}$, \ie the word observed in the field, satisfies $\phi$.
\end{problem}
Note that our aim in Prob.~\ref{prb:robust} is to find a run that is optimal under nominal values while ensuring that $\phi$ is never violated in the field.
Since $\tilde{\BBT}^\pi$, \ie the sequence of instants at which $\opt$ is satisfied during deployment, is likely to be sub-optimal, we will also seek to bound the deviation from optimality in the field.
As we consider LTL formulas containing $\Always\Event \opt$, this optimization problem is always well-posed.

Our solution to Problem~\ref{prb:robust} can be outlined as follows:
\begin{enumerate}
\item We obtain the team transition system $\BFT$ that captures the joint asynchronous behavior of the members of the team (See Sec.~\ref{sec:sol.tts});
\item We find an optimal satisfying run $r^\star_{team}$ on $\BFT$ using the \optrun algorithm we previously developed in \cite{SLS-JT-CB-DR:2011} and obtain individual optimal runs $r^\star_i,i=1,\ldots,m$ (See Sec. \ref{sec:sol.run}); 
\item We generate the synchronization sequences $s_i,i=1,\ldots,m$ to guarantee correctness in the field and calculate an upper bound on the field value of the cost function \eqref{eqn:cost_function} (See Sec. \ref{sec:sol.sync}).
\end{enumerate}

\section{Problem Solution}
\label{sec:sol}
In this section, we describe each step of our solution to Prob.~\ref{prb:robust} in detail with the help of a simple illustrative example.
We present our experimental results in Sec.~\ref{sec:exp}.

\subsection{Obtaining the Team Transition System}
\label{sec:sol.tts}
In~\cite{AU-SLS-XCD-CB-DR:2011.b}, we showed that the joint asynchronous behavior of a robotic team modeled as $m$ transition systems $\BFT_i, i=1,\ldots,m$ (Def.~\ref{def:ts}) can be captured using a region automaton.
A region automaton, as given in the following definition from ~\cite{AU-SLS-XCD-CB:2011}, is a finite transition system that keeps track of the relative positions of the robots as they move asynchronously in the environment.

\begin{definition}[\bf Region Automaton]
\label{def:ra}
The region automaton $\BFR$ is a TS (Def.~\ref{def:ts}) $\BFR \defeq (\CQ_\RMR,q_\RMR^0,\delta_\RMR, \Pi_\RMR, \CL_\RMR, w_\RMR)$, where
 	\begin{enumerate}
 	\item $\CQ_\RMR$ is the set of states of the form $(q,r)$ such that
	\begin{enumerate}
	\item $q$ is a tuple of state pairs $(q_1q_1',\ldots,q_mq_m')$ where the $i^{th}$ element $q_iq_i'$ is a source-target state pair from $\CQ_i$ of $\BFT_i$ meaning robot $i$ is currently on its way from $q_i$ to $q_i'$, and
	\item $r$ is a tuple of clock values $(x_1,\ldots,x_m)$ where $x_i\in\BBN$ denotes the time elapsed since robot $i$ left state $q_i$.
	\end{enumerate}
 	\item $q_\RMR^0\subseteq\CQ_\RMR$ is the set of initial states with $r=(0,\ldots,0)$ and $q=(q_1^0q_1',\ldots,q_m^0q_m')$ such that $q_i^0$ is the initial state of $\BFT_i$ and $(q_i^0,q_i') \in \delta_i$.
 	\item $\delta_\RMR$ is the transition relation such that a transition from $(q,r)$ to $(q',r')$ where the $i^{th}$ state pair $q_iq_i'$ and the $i^{th}$ clock value $x_i$ in $(q,r)$ change to $q_i'q_i''$ and $x_i'$ in $(q',r')$ exists if and only if
	\begin{enumerate}
	\item $(q_i,q_i'),(q_i',q_i'') \in \delta_i$ for all changed state pairs,
	\item $w_i(q_i,q_i') - x_i$ of all changed state pairs are equal to each other and are strictly smaller than those of unchanged state pairs, and
	\item for all changed state pairs, the corresponding $x_i'$ in $r'$ becomes $x_i'=0$ and all other clock values in $r$ are incremented by $w_i(q_i,q_i')-x_i$ in $r'$.
	\end{enumerate}
	\item $\Pi_\RMR=\cup_{i=1}^m\Pi_i$ is the set of propositions;
	\item $\CL_\RMR:\CQ_\RMR\to 2^{\Pi_\RMR}$ is a map giving the set of atomic propositions satisfied in a state. For a state $(q,r)$, $\CL_\RMR((q,r)) = \cup_{i|x_i=0}\CL_i(q_i)$;
	\item $w_\RMR: \delta_{R}\to \BBN_{>0}$ is a map that assigns a positive integer weight to each transition such that $w_\RMR((q,r),(q',r'))=w_i(q_i,q_i')-x_i$ for each state pair that has changed from $q_iq_i'$ to $q_i'q_i''$ with a corresponding clock value of $x_i'=0$ in $r'$.
 	\end{enumerate}
\end{definition}

\newexample{1}{%
Fig.~\ref{fig:example} illustrates the TS's of two robots that are expected to satisfy the mission $\phi := \Always(\prop_1 \Implies \Next (\notltl \prop_1\;\Until\;\prop_3))\andltl\Always \Event \opt$, where $\Pi_1=\{\prop_1,\,\opt\}$, $ \Pi_2=\{\prop_2,\,\prop_3,\,\opt\}$, and $\Pi = \{\prop_1,\,\prop_2,\,\prop_3,\,\opt\}$.
The region automaton $\BFR$ that models the robots is given in Fig.~\ref{fig:ra}.
}

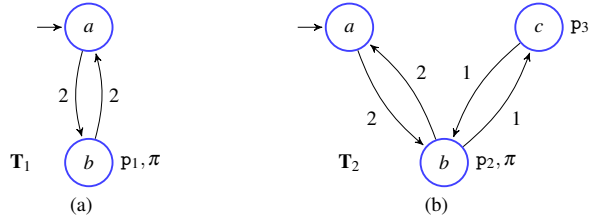
\begin{figure}[!ht]
\centering
\scalebox{0.9}{
	\subfigure[]{
		\label{fig:example.ts1}
		\begin{tikzpicture}[->,>=stealth',shorten >=1pt,auto,node distance=2cm,scale=2,initial text=,initial distance=0.2]

			\tikzstyle{every state}=[circle,thick,draw=blue!75,minimum size=7mm]
			\node [state,initial]						(a) at (0.5,1)	{$a$};
			\node [state,label=right:${\prop_1,\opt}$]	(b) at (0.5,0)	{$b$};
			\node []									(name) at (0,0) {$\BFT_1$};
			
			\path (a) edge [bend right=15,swap] node {2} (b);
			\path (b) edge [bend right=15,swap] node {2} (a);
	
		\end{tikzpicture}
	}%
} \hskip 0.5in%
\scalebox{0.9}{
	\subfigure[]{
		\label{fig:example.ts2}
		\begin{tikzpicture}[->,>=stealth',shorten >=1pt,auto,node distance=2cm,scale=2,initial text=,initial distance=0.2]

			\tikzstyle{every state}=[circle,thick,draw=blue!75,minimum size=7mm]
			\node [state,initial]						(a) at (0,1)	{$a$};
			\node [state,label=right:${\prop_2,\opt}$]	(b) at (0.7,0)	{$b$};
			\node [state,label=right:$\prop_3$]			(c) at (1.4,1)	{$c$};
			\node []									(name) at (0,0) {$\BFT_2$};
			
			\path (a) edge [bend right=15,swap] node {2} (b);
			\path (b) edge [bend right=15,swap] node {2} (a);
			\path (b) edge [bend right=15,swap] node {1} (c);
			\path (c) edge [bend right=15,swap] node {1} (b);
	
		\end{tikzpicture}
	}%
}%
\caption{Figs. (a) and (b) show the transition systems $\BFT_1$ and $\BFT_2$ of two robots in an environment with three vertices.
The states of the transition systems correspond to vertices $\{a,b,c\}$ and the edges represent the motion capabilities of each robot.
The weights of the edges represent the traveling times between any two vertices.
The propositions $\prop_1,\prop_2,\prop_3$ and $\opt$ are shown next to the vertices where they can be satisfied by the robots.}
\label{fig:example}
\end{figure}

~\\
However, as a region automaton encodes the directions of travel of the robots as opposed to their locations, it typically contains redundant states, and thus can typically be reduced to a smaller size.
The following example illustrates this fact.

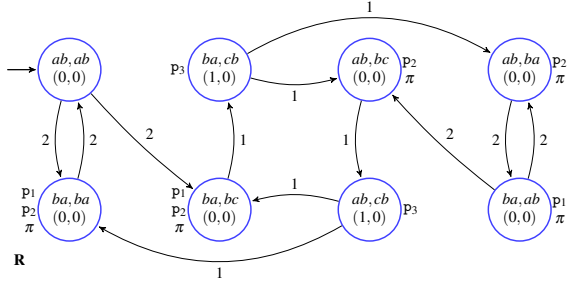
\begin{figure}[h]
\centering
\scalebox{0.7}{
	\begin{tikzpicture}[->,>=stealth',shorten >=1pt,auto,node distance=1.5cm,scale=1.9,initial distance=0.3]

		\tikzstyle{every state}=[circle,thick,draw=blue!75,minimum size=8mm, text centered]
		\node [state,initial,initial text=,initial left,text width=2.2em] (ab0ab0) at (1,1.4) {$ab,ab$ $(0,0)$};
		\node [state,text width=2.2em,label={left,xshift=0.25em}:\footnotesize$\prop_3$\normalsize] (ba1cb0) at (2.5,1.4) {$ba,cb$ $(1,0)$};
		\node [state,text width=2.2em,label={right,xshift=-0.5em}:\footnotesize$\begin{array}{c}\prop_2\\\pi\end{array}$\normalsize] (ab0bc0) at (4,1.4) {$ab,bc$ $(0,0)$};
		\node [state,text width=2.2em,label={right,xshift=-0.5em}:\footnotesize$\begin{array}{c}\prop_2\\\pi\end{array}$\normalsize] (ab0ba0) at (5.5,1.4) {$ab,ba$ $(0,0)$};
		\node [state,text width=2.2em,label={left,xshift=0.5em}:\footnotesize$\begin{array}{c}\prop_1\\\prop_2\\\pi\end{array}$\normalsize] (ba0ba0) at (1,0) {$ba,ba$ $(0,0)$};
		\node [state,text width=2.2em,label={left,xshift=0.5em}:\footnotesize$\begin{array}{c}\prop_1\\\prop_2\\\pi\end{array}$\normalsize] (ba0bc0) at (2.5,0) {$ba,bc$ $(0,0)$};
		\node [state,text width=2.2em,label={right,xshift=-0.25em}:\footnotesize$\prop_3$\normalsize] (ab1cb0) at (4,0) {$ab,cb$ $(1,0)$};
		\node [state,text width=2.2em,label={right,xshift=-0.5em}:\footnotesize$\begin{array}{c}\prop_1\\\pi\end{array}$\normalsize] (ba0ab0) at (5.5,0) {$ba,ab$ $(0,0)$};
		\node [] (name) at (0.5,-0.5) {$\BFR$};
		
		\path (ab0ab0) edge [bend right=15,swap] node {2} (ba0ba0);
		\path (ab0ab0) edge [bend right=5] node {2} (ba0bc0);
		\path (ba1cb0) edge [bend left=30] node {1} (ab0ba0);
		\path (ba1cb0) edge [bend right=15,swap] node {1} (ab0bc0);
		\path (ab0bc0) edge [bend right=15,swap] node {1} (ab1cb0);
		\path (ab0ba0) edge [bend right=15,swap] node {2} (ba0ab0);
		\path (ba0ab0) edge [bend right=15,swap] node {2} (ab0ba0);
		\path (ba0ab0) edge [bend left=5,swap] node {2} (ab0bc0);
		\path (ab1cb0) edge [bend right=15,swap] node {1} (ba0bc0);
		\path (ab1cb0) edge [bend left=30] node {1} (ba0ba0);
		\path (ba0bc0) edge [bend right=15,swap] node {1} (ba1cb0);
		\path (ba0ba0) edge [bend right=15,swap] node {2} (ab0ab0);

	\end{tikzpicture}
}\hfill
\sidecaption
\caption{The finite state region automaton capturing the joint behavior of two robots in 9 states.
In a circle representing a state $(q,r)$, the first line is $q$ and the second line is $r$. }
\label{fig:ra}
\end{figure}

~\\
\examplerevisited{1}{%
State $((ab,bc),(0,0))$ of the region automaton $\BFR$ given in Fig.~\ref{fig:ra} is equivalent to the state $((ab,ba),(0,0))$ in the sense that both robots satisfy the same propositions and the positions of both robots are the same at both states, \ie robot 1 is at $a$ and robot 2 is at $b$.
These two states differ only in the future direction of travel of the second robot, \ie robot 2 travels towards $c$ in the first state whereas it travels towards $a$ in the second state.
This information, however, is redundant as it can be obtained just by looking at the next state of the team in any given run.
}

~\\
Motivated by this observation, we define a binary relation $\CR$ to reduce the region automaton $\BFR$ to a smaller team transition system $\BFT$.

\begin{definition}[\bf Binary Relation $\CR$]
\label{def:binary_rel}
Binary relation $\CR = \{ ( s,t) | s \in \CQ_\RMR,$ $t \in \CQ_\RMT\}$ is a mapping between the states of $\BFR$ and $\BFT$ that maps a state $s = ((q_1q_1',\ldots,q_mq_m'),$ $(x_1,\ldots,x_m))$ in $\CQ_\RMR$ to a state $t = (t_1,\ldots,t_m)$ in $\CQ_\RMT$, where $t_i = q_i$ if $x_i=0$ and $t_i=q_iq_i'x_i$ if $x_i>0$.
Note that, $x_i=0$ for at least one $i\in\{1,\ldots,m\}$.
We refer to a state $t_i\in\CQ_\RMT$ of the form $q_iq_i'x_i$ as a traveling state as it captures the instant where robot $i$ has traveled from $q_i$ to $q_i'$ for $x_i$ time units.
\end{definition}

Given a region automaton $\BFR$, we can obtain the corresponding team transition system $\BFT$ using the binary relation $\CR$ and the following procedure.

\begin{Procedure}[\bf Obtaining $\BFT$ from $\BFR$]
\label{prc:t_from_r}
Using $\CR$ we construct the team transition system $\BFT$ from the region automaton $\BFR$ as follows:
\begin{enumerate}
\item For each $s \in \CQ_\RMR$ we define the corresponding $t \in \CQ_\RMT$ as given in Def.~\ref{def:binary_rel} such that $( s, t) \in \CR$.
\item We set $\CL_\RMT(t) = \CL_\RMR(s)$. Note that, each $s$ that corresponds to a given $t$ has the same set of propositions due to the way $\BFR$ is constructed (Def.~\ref{def:ra})~\cite{AU-SLS-XCD-CB-DR:2011.b}.
\item For each $s$ corresponding to a given $t$, we define the corresponding transitions originating from $t$ in $\BFT$ such that $\exists ( t,t')\in\delta_\RMT \, \forall \, ( s, s')\in\delta_\RMR$ where $( s,t)\in\CR$ and $( s',t')\in\CR$.
\item We mark a state $t$ in $\CQ_\RMT$ as the initial state of $\BFT$ if the corresponding $s$ is an initial state in $\CQ_\RMR$. Note that, all states that correspond to a given $t$ are either in $q^0_\RMR$ altogether or none of them are in $q^0_\RMR$.
\end{enumerate}
\end{Procedure}

The following proposition shows that the team transition system $\BFT$ obtained using Proc.~\ref{prc:t_from_r} and the corresponding region automaton $\BFR$ are bisimulation equivalent, \ie there exists a binary relation between the states and the transitions of $\BFR$ and $\BFT$ such that they behave in the same way \cite{CB-JPK:2008}.

\begin{proposition}[\bf Bisimulation Equivalence]
\label{prp:tts_sim_ra}
The team transition system $\BFT$ obtained using Proc.~\ref{prc:t_from_r} and the region automaton $\BFR$ are bisimulation equivalent, \ie $\BFR \sim \BFT$, and $\CR$ is a bisimulation relation for $\BFR$ and $\BFT$.
\end{proposition}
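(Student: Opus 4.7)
The plan is to verify the four standard bisimulation conditions for the relation $\CR$ of Def.~\ref{def:binary_rel}: (a) $\CR$ is total on $\CQ_\RMR$; (b) initial states correspond; (c) $\CR$-related states carry the same label; and (d) one-step transitions match in both directions. Conditions (a)--(c) I expect to dispatch by direct inspection of Proc.~\ref{prc:t_from_r}. Totality is built into Def.~\ref{def:binary_rel}, which assigns every $s\in\CQ_\RMR$ a uniquely determined $t\in\CQ_\RMT$. The initial-state condition is Step~4 of the procedure, together with the observation that every initial state of $\BFR$ has clock tuple $(0,\ldots,0)$, so whether a given $t$ has initial preimages depends only on the source vertices $q_i^0$, which $\CR$ preserves. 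Label preservation is Step~2, and it is consistent across preimages because $\CL_\RMR((q,r)) = \bigcup_{i \mid x_i = 0}\CL_i(q_i)$ depends only on data $\CR$ retains.

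The substantive part is condition (d). The forward direction is immediate: Step~3 of Proc.~\ref{prc:t_from_r} defines $\delta_\RMT$ as the $\CR$-image of $\delta_\RMR$, so every $(s,s')\in\delta_\RMR$ produces a matching $(t,t')\in\delta_\RMT$ with $(s,t),(s',t')\in\CR$. The backward direction is the main obstacle: given $(s,t)\in\CR$ and $(t,t')\in\delta_\RMT$, one must produce $s'$ with $(s,s')\in\delta_\RMR$ and $(s',t')\in\CR$, yet the $\BFT$-transition $(t,t')$ was generated from some witness pair $(s_0,s_0')\in\delta_\RMR$ whose $s_0$ need not equal $s$. The only way two $\CR$-preimages of a common $t$ can differ is in the future destination $q_i'$ of a robot currently sitting at a vertex, i.e., with $x_i=0$.

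My strategy for the backward step is to reconstruct $s'$ from $s$ componentwise using Def.~\ref{def:ra}(3). The target $t'$ reveals which robots have just triggered the transition and what their new source vertices became, the untriggered ones being those whose components of $t$ survive unchanged in $t'$. For every robot that was already in transit in $s$ ($x_i>0$), the destination $q_i'$ is encoded in $t$ itself, so $s$ and $s_0$ share the same residual time $w_i(q_i,q_i')-x_i$ and contribute identical candidate trigger times. I then assemble $s'$ by inheriting non-triggering components from $s$ and setting the triggered robots' source-target pairs and clocks according to $t'$. The hardest sub-step is discharging the timing constraint Def.~\ref{def:ra}(3)(b), because a triggered robot with $x_i=0$ may carry a $q_i'$ in $s$ different from the one in $s_0$ and hence a different residual $w_i(q_i,q_i')$; the plan here is a case analysis on which robots fire, appealing to the triggering time that $t'$ encodes through its zero-clock components and to the fact that, by Step~3 of Proc.~\ref{prc:t_from_r}, $\delta_\RMT$ is built consistently from all $\CR$-preimages of $t$.
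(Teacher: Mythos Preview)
Your outline coincides with the paper's: verify label agreement, initial-state correspondence, and the two one-step transfer conditions. The paper's argument is a few lines that simply assert each of these ``by construction,'' with no unpacking of the backward step. You go further by recognizing that the $\BFT\to\BFR$ direction is not automatic and by sketching a componentwise reconstruction of $s'$ from $s$ and $t'$.

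That reconstruction cannot be completed in general, and the obstruction is more basic than the timing clause Def.~\ref{def:ra}(3)(b) you single out. A robot with $x_i=0$ in $s$ is already committed in $\BFR$ to a destination $q_i'$, information that $\CR$ discards. If this robot does \emph{not} trigger on the step to $t'$, its clock becomes positive and every $\BFR$-successor of $s$ carries the traveling state $q_iq_i'x$ in coordinate~$i$. But $t'$ was produced from a witness $s_0$ that may have committed robot~$i$ to a different destination $q_i''$, so coordinate~$i$ of $t'$ is $q_iq_i''x$, and no successor of $s$ can match it. Concretely: robot~$1$ at $a$ with edges $a\!\to\!b$ of weight~$2$ and $a\!\to\!c$ of weight~$3$; robot~$2$ at $d$ with a single edge $d\!\to\!e$ of weight~$1$. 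Then $s_0=((ab,de),(0,0))$ and $s=((ac,de),(0,0))$ both map to $t=(a,d)$; Step~3 gives $\BFT$ the edge $t\to t'=(ab1,e)$ via $s_0$, yet every $\BFR$-successor of $s$ has first pair $ac$ with clock~$1$ and hence maps to $(ac1,e)\neq t'$. Your appeal to ``Step~3 builds $\delta_\RMT$ consistently from all preimages'' certifies only the forward transfer. The paper's proof glosses over exactly this point; what does hold by construction is that $\CR$ is a (forward) simulation and $\BFT$ is the $\CR$-quotient of $\BFR$, which yields trace equivalence---enough for the LTL use downstream---but not bisimulation via $\CR$ in the standard sense.
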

\begin{proof}
In the following, we use $Post(s)$ to denote the set of states that can be reached from state $s$ after taking a single transition out of $s$.
For any $( s,t)\in\CR$ where $s \in \CQ_\RMR$ and $t \in \CQ_\RMT$, it holds that $\CL(s) = \CL(t)$.
Furthermore, for any $( s,t)\in\CR$ it also holds by construction that $\forall s' \in Post(s), \exists t' \in Post(t) | ( s',t')\in\CR$ and $\forall t' \in Post(t), \exists s' \in Post(s) | ( s', t') \in \CR$.
Finally, we also have $\forall s \in q^0_\RMR$, $\exists t \in q^0_\RMT | ( s,t) \in \CR$ and $\forall t \in q^0_\RMT, \exists s \in q^0_\RMR | ( s,t) \in \CR$.
Therefore, $\BFR$ and $\BFT$ are bisimulation equivalent, \ie $\BFR\sim\BFT$, and $\CR$ is a bisimulation relation for $\BFR$ and $\BFT$.
\qedsymbol
\end{proof}

\examplerevisited{1}{
Using $\CR$ we construct $\BFT$ (Fig.~\ref{fig:tts}) that captures the joint asynchronous behavior of the team in 6 states whereas the corresponding region automaton $\BFR$ had 9 states.
A state labeled $( a,b )$ means robot 1 is at region $a$ and robot 2 is at region $b$, whereas a state labeled $( ba1,c )$ means robot 1 traveled from $b$ to $a$ for 1 time unit and robot 2 is at c.
}

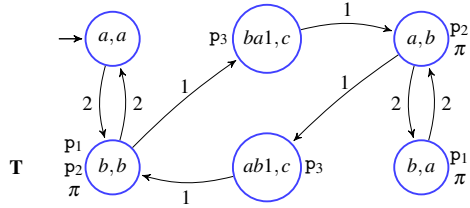
\begin{figure}[h]
\centering
\scalebox{0.9}{
	\begin{tikzpicture}[->,>=stealth',shorten >=1pt,auto,node distance=1.5cm,scale=1.9,initial distance=0.2]

		\tikzstyle{every state}=[circle,thick,draw=blue!75,minimum size=8mm, text centered]
		\node [state,initial,initial text=,initial left] (aa) at (0,1) {$a,a$};
		\node [state,label={left,xshift=0.2em}:\footnotesize$\prop_3$\normalsize] (ba1c) at (1.2,1) {$ba1,c$};
		\node [state,label={right,xshift=-0.5em}:\footnotesize$\begin{array}{c}\prop_2\\\pi\end{array}$\normalsize] (ab) at (2.4,1) {$a,b$};
		\node [state,label={left,xshift=0.5em}:\footnotesize$\begin{array}{c}\prop_1\\\prop_2\\\pi\end{array}$\normalsize] (bb) at (0,0) {$b,b$};
		\node [state,label={right,xshift=-0.2em}:\footnotesize$\prop_3$\normalsize] (ab1c) at (1.2,0) {$ab1,c$};
		\node [state,label={right,xshift=-0.5em}:\footnotesize$\begin{array}{c}\prop_1\\\pi\end{array}$\normalsize] (ba) at (2.4,0) {$b,a$};
		\node []									(name) at (-0.75,0) {$\BFT$};
		
		\path (aa) edge [bend right=15,swap] node {2} (bb);
		\path (ba1c) edge [bend left=15] node {1} (ab);
		\path (ab) edge [bend right=5,swap] node [xshift=0.7em] {1} (ab1c);
		\path (ab) edge [bend right=15,swap] node {2} (ba);
		\path (ba) edge [bend right=15,swap] node {2} (ab);
		\path (ab1c) edge [bend left=15] node {1} (bb);
		\path (bb) edge [bend right=15,swap] node {2} (aa);
		\path (bb) edge [bend left=5] node [xshift=0.7em] {1} (ba1c);

	\end{tikzpicture}
}\hfill
\sidecaption
\caption{The team transition system capturing the joint behavior of two robots in 6 states.}
\label{fig:tts}
\end{figure}

In~\cite{AU-SLS-XCD-CB-DR:2011.b} we showed that the number of states $|\CQ_\RMR|$ of the region automaton $\BFR$ that models the $m$ TSs $\BFT_i, i=1,\ldots,m$ is bounded by $\left(\prod_{i=1}^m|\delta_i|\right) \left(\prod_{i=1}^{m} W_i - \prod_{i=1}^{m} (W_i - 1) \right)$, where $|\delta_i|$ is the number of transitions in the TS $\BFT_i$ of robot $i$ and $W_i$ is maximum weight of any transition in $\BFT_i$.
The following proposition provides a bound on the number of states $|\CQ_\RMT|$ of $\BFT$ and shows that it is indeed significantly smaller than the bound on $|\CQ_\RMR|$.
\begin{proposition}
\label{prp:tts_state_bound}
The number of states $|\CQ_\RMT|$ of $\BFT$ is bounded by
\begin{equation}
\label{eqn:tts_state_bound}
\prod_{i=1}^m|\CQ_i|+(W-1)\prod_{i=1}^m|\delta_i|
\end{equation}
where $W$ is the largest edge weight in all TS's.
\end{proposition}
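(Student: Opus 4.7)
The plan is to split $\CQ_\RMT$ into two disjoint pieces and bound each separately. Let $\CQ_\RMT^{\mathrm{v}}$ denote the set of states $(t_1,\ldots,t_m)$ in which every component $t_i$ is a pure vertex of $\CQ_i$, and let $\CQ_\RMT^{\mathrm{t}}=\CQ_\RMT\setminus\CQ_\RMT^{\mathrm{v}}$ consist of states with at least one traveling component $t_j=q_jq_j'x_j$ (with $x_j>0$). By Def.~\ref{def:binary_rel}, every state of $\CQ_\RMT^{\mathrm{t}}$ still has at least one vertex component, since some $x_i$ must vanish.

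For $\CQ_\RMT^{\mathrm{v}}$, each state is specified by independently choosing a vertex $t_i\in\CQ_i$ for every robot, which immediately yields $|\CQ_\RMT^{\mathrm{v}}|\le\prod_{i=1}^m|\CQ_i|$.

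For $\CQ_\RMT^{\mathrm{t}}$, the idea is to construct an injection into the finite set $\{1,\ldots,W-1\}\times(\delta_1\times\cdots\times\delta_m)$, which has cardinality $(W-1)\prod_{i=1}^m|\delta_i|$. Given a state in $\CQ_\RMT^{\mathrm{t}}$, an edge tuple $(e_1,\ldots,e_m)$ is read off by setting $e_j=(q_j,q_j')$ on traveling components and selecting a canonical outgoing edge from $q_i$ (for instance, the lexicographically smallest one) on vertex components. A positive clock $\tau\in\{1,\ldots,W-1\}$ is then extracted from the underlying region-automaton state via the bisimulation $\CR$ of Prop.~\ref{prp:tts_sim_ra}; the bound $\tau<W$ follows from $x_i<w_i(e_i)\le W$ for any traveling robot. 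Combining this with the bound on $|\CQ_\RMT^{\mathrm{v}}|$ gives \eqref{eqn:tts_state_bound}.

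The main obstacle is justifying the injectivity of the second map: different traveling robots can, in principle, carry different positive clock values, so it is not immediately clear that the full clock vector is determined by a single scalar $\tau$ once the edge tuple is fixed. Establishing this would require an induction on runs of $\BFR$ that exploits the synchronous-advancement rule of Def.~\ref{def:ra}, namely that at every transition the non-resetting clocks are all incremented by the same amount while the resetting ones return to zero. The resulting invariant to maintain is that, after fixing the edge tuple, the nonzero clocks in any reachable state are pinned down by a single positive scalar in $\{1,\ldots,W-1\}$, from which the announced bound on $|\CQ_\RMT^{\mathrm{t}}|$ follows.
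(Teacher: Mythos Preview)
Your split into pure-vertex states and traveling states, together with the bound $\prod_i|\CQ_i|$ on the first piece, matches the paper exactly. For the traveling states the paper argues from Alg.~\ref{alg:construct_tts}: at most $\prod_i|\delta_i|$ distinct transition tuples are ever processed at line~8, and each is asserted to produce at most $W-1$ new traveling states. Your proposed injection into $\{1,\ldots,W-1\}\times\prod_i\delta_i$ is the combinatorial restatement of that same assertion, so you have correctly located the crux of the argument and are on the same track as the paper.

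Unfortunately the invariant you hope to establish---that once the edge tuple is fixed the positive clocks are pinned down by a single scalar in $\{1,\ldots,W-1\}$---is false, and no induction on runs of $\BFR$ will produce it. The synchronous-advancement rule of Def.~\ref{def:ra} only says that non-resetting clocks advance by the same amount at each step; it does not prevent different robots from having reset at different past events, and those reset times are history-dependent rather than determined by the current edge tuple. Concretely, take $m=3$ robots, each on a two-vertex cycle $a_i\leftrightarrow b_i$ with both edges of weight $p_i$, where $(p_1,p_2,p_3)=(2,3,5)$. Then $|\CQ_i|=|\delta_i|=2$ and $W=5$, so \eqref{eqn:tts_state_bound} evaluates to $2^3+(5-1)\cdot 2^3=40$. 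But the team evolves deterministically with period $\operatorname{lcm}(4,6,10)=60$, and $\BFT$ contains one state for every $t\in\{0,\ldots,59\}$ divisible by $2$, $3$, or $5$; by inclusion--exclusion there are $44$ such $t$, and since $t\mapsto(t\bmod 4,\,t\bmod 6,\,t\bmod 10)$ is injective on this range, all $44$ team states are distinct. Hence $|\CQ_\RMT|=44>40$, so the bound \eqref{eqn:tts_state_bound} itself fails here, and neither your injection nor the paper's per-tuple count can be completed as stated.
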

\begin{proof}
The first term in \eqref{eqn:tts_state_bound} is the maximum number of states that we can have in the Cartesian product of $T_i, i=1,\ldots,m$.
The second term in \eqref{eqn:tts_state_bound} is an upper-bound on the number of traveling states (Def.~\ref{def:binary_rel}) that we can define as we construct $\BFT$.
Here, $\prod_{i=1}^m|\delta_i|$ is the maximum number of different transition tuples that we can consider at line 8 of Alg.~\ref{alg:construct_tts} and $(W-1)$ is the upper bound on the number of new traveling states per transition tuple.
Thus, $|\CQ_\RMT|$ is bounded by the sum of these two terms as given in \eqref{eqn:tts_state_bound}.
\qedsymbol
\end{proof}

Finally, we note that the states of $\BFT$ corresponds to the instants where at least one member of the team has completed a transition in its individual TS and is currently at a vertex while other robots may still be traveling.
Using this fact, one can construct $\BFT$ directly by using a depth first search that runs in parallel on the TS's of the individual members of the team as given in Alg. ~\ref{alg:construct_tts}.

\begin{algorithm}[h]
\DontPrintSemicolon 
\SetInd{0.5em}{0.5em}
\KwIn{$(\BFT_1,\ldots,\BFT_m)$.}
\KwOut{Corresponding team transition system $\BFT$.}
\BlankLine
$q^0_\RMT := ( q^0_1,\ldots,q^0_m)$, where $q^0_i$ is the initial state of $\BFT_i$.\;
{\bf dfsT}($q^0_\RMT$).\;
\BlankLine
	\hrule	
	{\bf Function dfsT}(state tuple $q \in \CQ_\RMT$)
	\hrule
	\BlankLine
		$q[i]$ is the $i^{th}$ element of state tuple $q\in\CQ_\RMT$.\;
		$t_i$ is a transition of $\BFT_i,i=1,\ldots,m$, such that $t_i \in \{( q[i],q_i' )|( q[i],q_i')\in\delta_i\}$ if $q[i] \in \CQ_i$. Else if $q[i] = q_iq_i'x_i$, then $t_i = ( q_i,q_i')$.\;
		$T := ( t_1,\ldots,t_m)$ is a tuple of such transitions.\;
		$\CT$ is the set of all such transition tuples at $q$.\;
		\ForEach {transition tuple $T \in \CT$} {
		$w \leftarrow$ Shortest time until a robot is at a vertex while the transitions in $T$ are being taken.\;
			Find the $q'$ that corresponds to this new state of the team using $\CR$.\;
			\If{$q' \notin \CQ_\RMT$} {
				Add state $q'$ to $\CQ_\RMT$.\;
				Set $\CL_\RMT(q') = \cup_{i|q'[i]\in\CQ_i} \CL_i(q'[i])$.\;
				Add $( q,q')$ to $\delta_\RMT$ with weight $w$.\;
				Continue search from $q'$: {\bf dfsT}$(q')$.\;
			}
			\ElseIf{$( q,q') \notin \delta_\RMT$} {
				Add $( q,q')$ to $\delta_\RMT$ with weight $w$.\;
			}
		} 
\caption{\sc Construct-Team-TS}\label{alg:construct_tts}
\end{algorithm}

Alg.~\ref{alg:construct_tts} is essentially a recursive depth first search (lines 4 -- 17) that starts at the initial state of the team transition system $\BFT$ (line 3).
The initial state $q^0_\RMT$ of $\BFT$ is defined as the tuple of the initial states of the $m$ $\BFT_i$s (line 2).
Given a state $q$ of $\BFT$, the function \texttt{dfsT} first generates all possible tuples of transitions that can be taken at the current states of the $m$ TSs (lines 4 -- 7).
The current state of TS $\BFT_i$ is given by the $i^{th}$ element $q[i]$ of the current state $q$ of the $\BFT$.
At line 5 of Alg.~\ref{alg:construct_tts}, we consider all possible transitions out of the current states of all TSs $\BFT_i,i=1,\ldots,m$.
If $q[i]\in\CQ_i$, \ie $q[i]$ is a regular state of $\BFT_i$, then all transitions going out of this state in $\BFT_i$ will be considered in the transition tuples that we will construct.
Else, $q[i]$ is a traveling state of $\BFT_i$ of the form $q_iq_i'x_i$, and the only transition that can be taken is the one that is being taken, \ie the transition from $q_i$ to $q_i'$.
Then, we construct the set of all possible tuples of transitions that can be taken at the current states of the $m$ TSs (lines 6--7) and process each tuple one by one (lines 8--17).
In a transition tuple $T$, the $i^{th}$ element gives the transition that can be taken at the current state of $\BFT_i$.
In lines 9--10, we find the next instant where at least one transition in $T$ has been completed and the next state $q'$ of $\BFT$ that has been reached.
If $q'$ is a new state (lines 11 -- 15), we accordingly add it to $\CQ_\RMT$ and define its propositions.
Then, we add the transition that has just been completed to $\delta_\RMT$ and continue our search from this new state $q'$.
Else, we add the transition that has just been completed to $\delta_\RMT$ if required and proceed to the next transition tuple in $\CT$.
The algorithm concludes when all states and transitions of $\BFT$ have been discovered.

~\\
\myremark{1}{Comparison with Naive Construction}{%
One can avoid going through Alg.~\ref{alg:construct_tts} and capture the joint behavior of the team by discretizing each transition in $\BFT_i,i=1,\ldots,m$ to unit-length edges and taking the synchronous product of these $m$ $\BFT_i$'s.
This approach, however, yields a much larger model whose state count is bounded by 
\[
\prod_{i=1}^{m}\left(|\CQ_i|+\sum_{( q,q') \in \delta_i}w_i(q,q')-|\delta_i|\right).
\]
For the case where we have $m$ identical robots in an environment with $Q$ vertices, $\Delta$ edges and a largest edge weight of $W$, the above given bound is $O((Q+\Delta W)^m)$, whereas the bound given by Prop.~\ref{prp:tts_state_bound} is $O(Q^m+\Delta^mW)$.
}

\subsection{Obtaining Optimal Satisfying Runs and Transition Systems with Traveling States}
\label{sec:sol.run}
After constructing $\BFT$ that models the team, we use Alg. \optrun from \cite{SLS-JT-CB-DR:2011} to obtain an optimal run $r^\star_{team}$ on $\BFT$ that minimizes the cost function \eqref{eqn:cost_function}.
The optimal run $r^\star_{team}$ is always in prefix-suffix form, consisting of a finite sequence of states of $\BFT$ (prefix), followed by infinite repetitions of another finite sequence of states of $\BFT$ (suffix) as given in Def.~\ref{def:prefix_suffix}.

~\\
\examplerevisited{1}{
For the example we have shown, running Alg. \optrun \cite{SLS-JT-CB-DR:2011} on $\BFT$ given in Fig.~\ref{fig:tts} for the formula $\phi = \Always(\prop_1 \Implies \Next (\notltl \prop_1\;\Until\;\prop_3))\andltl\Always \Event \opt$ results in the optimal run
\begin{center}
\scalebox{0.9}{%
\begin{tabular}{ c |c c c c c c c }
$\BBT$& 0 & 2 & 3 & 4 & 5 & 6 & \ldots \\
\hline
$r^\star_{team}$ & a,a& b,b& ba1,c& a,b& ab1,c& b,b& \ldots\\
\hline
$\CL_\BFT(\cdot)$ & $\emptyset$ & $\prop_1,\prop_2,\opt$ & $\prop_3$ & $\prop_2,\opt$ & $\prop_3$ & $\prop_1,\prop_2,\opt$ & \ldots \\
\end{tabular}
}
\end{center}
where the first row shows when transitions occur, the second row corresponds the run $r_{team}^\star$, and the last row shows the satisfying atomic propositions.
For this run, $(a,a),(b,b)$ is the finite prefix and $(ba1,c),(a,b),(ab1,c),(b,b)$ is the suffix cycle, which will be repeated infinite number of times.
Also, the time sequence $\BBT^\opt$ of satisfaction of $\opt$ is $\BBT^\opt = 2,4,6,8,\ldots $ and the cost as defined in \eqref{eqn:cost_function} is $J(\BBT^\opt) = 2$.
}

~\\
Since $\BFT$ captures the asynchronous motion of the robots, the optimal satisfying run $r^\star_{team}$ on $\BFT$ may contain some traveling states which do not appear in the individual TSs $\BFT_i, i=1,\ldots,m$ that we started with.
But we cannot ignore such traveling states either, as each one of them is a candidate synchronization point for the corresponding robot as we discuss in Sec.~\ref{sec:sol.sync}.
Instead, we insert those traveling states into the individual TSs so that the robots will be able to synchronize with each other at those points if needed.
In the following, we use $q^k[i]$ to denote the $i^{th}$ element of the $k^{th}$ state tuple in $r^\star_{team}$, which is also the state of robot $i$ at that position of $r^\star_{team}$.
As given in Def.~\ref{def:binary_rel}, a traveling state of robot $i$ has the form $q_iq_i'x_i$.
First, we construct the set $\CS=\{(i,q^k[i])\;|\;q^k[i]=q_iq_i'x_i\,\forall\,k,i\}$ of all traveling states that appear in $r^\star_{team}$.
Elements of $\CS$ are tuples where the second element is a traveling state and the first element gives the transition system this new traveling state will be added to.
Next, we construct the set $\CT=\{(i,(q^k[i],q^{k+1}[i]),x)\;|\; ((i,q^k[i])\in\CS)\orltl((i,q^{k+1}[i])\in\CS),\, x= w_\RMT(q^k,q^{k+1})\,\forall\,k,i\}$ of all transitions that involve any of the traveling states in $r^\star_{team}$.
Elements of $\CT$ are triplets where the second element is a transition, the third element is the weight of this transition, and the first element shows the transition system that this new transition will be added to.
Then, we add the traveling states in $\CS$ and the transitions in $\CT$ to their corresponding transition systems.
Finally, using the following definition, we project the optimal satisfying run $r^\star_{team}$ down to individual robots $\BFT_i, i=1,\ldots,m$ to obtain individual optimal satisfying runs $r^\star_i,i=1,\ldots,m$.

\begin{definition}[\bf Projection of a Run on $\BFT$ to $\BFT_i$'s]
\label{def:projection_of_runs}
	Given a run $r_{team}$ on $\BFT$ where $r_{team}=q^0,q^1,\ldots$, we define its projection on $\BFT_i$ as run $r_i=q_i^0,q_i^1,\ldots$ for all $i=1,\ldots,m$, such that $q_i^k = q^k[i]$ where $q^k[i]$ is the $i^{th}$ element of tuple $q^k$.
\end{definition}

It can be easily seen that the set of runs $r_i,i=1,\ldots,m$ obtained from $r_{team}$ using Def.~\ref{def:projection_of_runs} and the run $r_{team}$ on $\BFT$ indeed correspond to each other:
The projection given in Def.~\ref{def:projection_of_runs} simply breaks down a sequence of tuples of states into a tuple of sequences of states, while preserving the order of the states.
Thus, the word $\omega$ and the time sequence $\BBT$ generated by $r_i,i=1,\ldots,m$ are exactly the word $\omega_{team}$ and the time sequence $\BBT_{team}$ generated by $r_{team}$.
Moreover, if run $r_{team}$ is in prefix-suffix form, all individual runs $r_i$ projected from $r_{team}$ are also in prefix-suffix form.
Therefore, the individual runs projected from the optimal run $r^\star_{team}$ are always in prefix-suffix form.

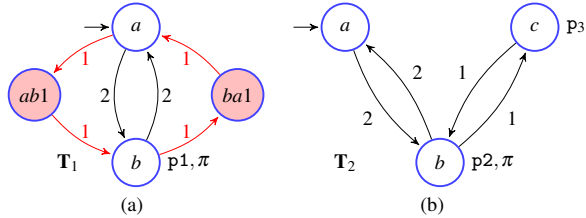
\begin{figure}[!ht]
\centering
\scalebox{0.9}{
\subfigure[]{
	\label{fig:comp_ts1}
	\begin{tikzpicture}[->,>=stealth',shorten >=1pt,auto,node distance=2cm,scale=2,initial text=,initial distance=0.2]

		\tikzstyle{every state}=[circle,thick,draw=blue!75,minimum size=7mm]
		\node [state,initial] (a) at (0.5,1) {$a$};
		\node [state,fill=red!25] (ab1) at (-0.25, 0.5) {$ab1$};
		\node [state,label=right:$\mathtt{p1,\pi}$]	(b) at (0.5,0)	{$b$};
		\node [state,fill=red!25] (ba1) at (1.25, 0.5) {$ba1$};
		\node []									(name) at (0,0) {$\BFT_1$};
		
		\path (a) edge [bend right=25,swap] node [xshift=0.15em] {2} (b);
		\path (b) edge [bend right=25,swap] node [xshift=-0.15em] {2} (a);
		\path [red] (a) edge [bend right=15] node [xshift=-0.3em,yshift=0.3em] {1} (ab1);
		\path [red] (ab1) edge [bend right=15] node [xshift=-0.3em,yshift=-0.3em] {1} (b);
		\path [red] (b) edge [bend right=15] node [xshift=0.3em,yshift=-0.3em] {1} (ba1);
		\path [red] (ba1) edge [bend right=15] node [xshift=0.3em,yshift=0.3em] {1} (a);

	\end{tikzpicture}
}%
}%
\scalebox{0.9}{
\subfigure[]{
	\label{fig:comp_ts2}
	\begin{tikzpicture}[->,>=stealth',shorten >=1pt,auto,node distance=2cm,scale=2,initial text=,initial distance=0.2]

		\tikzstyle{every state}=[circle,thick,draw=blue!75,minimum size=7mm]
		\node [state,initial] (a) at (0,1)	{$a$};
		\node [state,label=right:$\mathtt{p2,\pi}$]	(b) at (0.7,0)	{$b$};
		\node [state,label=right:$\prop_3$]	(c) at (1.4,1)	{$c$};
		\node []									(name) at (0,0) {$\BFT_2$};
		
		\path (a) edge [bend right=15,swap] node {2} (b);
		\path (b) edge [bend right=15,swap] node {2} (a);
		\path (b) edge [bend right=15,swap] node {1} (c);
		\path (c) edge [bend right=15,swap] node {1} (b);

	\end{tikzpicture}
}%
}\hfill
\sidecaption
\caption{Figs. (a) and (b) show the TSs with new traveling states that correspond to the optimal run $r^\star_{team}$ that we computed for Ex.~1.
The new traveling states and transitions of $\BFT_1$ are highlighted in red.}
\label{fig:complemented_ts's}
\end{figure}

\examplerevisited{1}{%
For this example, we have $\CS=\{(1,ab1),(1,ba1)\}$ and $\CT=\{(1,(a,ab1),1),(1,(ab1,b),1),(1,(b,ba1),1),(1,(ba1,a),1)\}$.
Fig.~\ref{fig:complemented_ts's} illustrates the corresponding TSs with new traveling states and transitions highlighted in red.
Using Def.~\ref{def:projection_of_runs}, we obtain the runs of the individual robots as $r^\star_1 = a, b, ba1, a, ab1,$ $b, ba1, a, ab1, \ldots$ and $r^\star_2 = a, b, c ,b, c, b, c, b, c, \ldots$.
}

\subsection{Guaranteeing Correctness through Synchronization and the Optimality Bound}
\label{sec:sol.sync}

As the robots execute their infinite runs in the field, they synchronize with each other according to the synchronization sequences that we generate using Alg.~\ref{alg:sync_seq}.
The synchronization sequence $s_i$ of robot $i$ is an infinite sequence of pairs of sets.
The $k^{th}$ element of $s_i$, denoted by $s_i^k$, corresponds to the $k^{th}$ element $q_i^k$ of $r_i^\star$.
Each $s_i^k$ is a tuple of two sets of robots: $s_i^k = (s_{i,wait}^k, s_{i,notify}^k)$, where $s_{i,wait}^k$ and $s_{i,notify}^k$ are the \emph{wait-set} and \emph{notify-set} of $s_i^k$, respectively.
The \emph{wait-set} of $s_i^k$ is the set of robots that robot $i$ must wait for at state $q_i^k$ before satisfying its propositions and proceeding to the next state $q_i^{k+1}$ in $r_i^\star$.
The \emph{notify-set} of $s_i^k$ is the set of robots that robot $i$ must notify as soon as it reaches state $q_i^k$.
As we discussed earlier in Sec.~\ref{sec:sol.run}, the optimal run $r^\star_{team}$ of the team and the individual optimal runs $r^\star_i, i=1,\ldots,m$ of the robots are always in prefix-suffix form (Def.~\ref{def:prefix_suffix}).
Consequently, individual synchronization sequences $s_i$ of the robots are also in prefix-suffix form.

\begin{algorithm} 
\DontPrintSemicolon 
\SetInd{0.5em}{0.5em}
\KwIn{Individual optimal runs of the robots $\{r_1^\star,\dots,r_m^\star\}$, \buchi automaton $\BFB_{\notltl\phi}$ that corresponds to $\notltl\phi$.}
\KwOut{Synchronization sequence for each robot $\{s_1,\dots,s_m\}$.}
\BlankLine
$\CI = \{1,\ldots,m\}$.\;
$beg \leftarrow$ beginning of suffix cycle.\;
$end \leftarrow$ end of suffix cycle.\;
Initialize each $s_i$ so that all robots wait for and notify each other at every position of their runs.\;
\ForEach{$k = 0,\ldots,end$}{
	\ForEach{$i \in \CI$} {
		\If{$k\neq 0$ and $k\neq beg$}{
			\ForEach{$j \in \CI\setminus i$}{
				Remove $j$ from $s_{i,wait}^k$.\;
				Remove $i$ from $s_{j,notify}^k$.\;
				Construct the TS $\BFW$ that generates every possible $\tilde{\omega}_{team}$.\;
				\If{the language of $\BFB_{\notltl\phi}\times\BFW$ is not empty}{
					Add $j$ back to $s_{i,wait}^k$.\;
					Add $i$ back to $s_{j,notify}^k$.\;
				}
			}
		}
	}
}
Rest of each $s_i$ is an infinite repetition of its suffix-cycle, i.e. $s_i^{beg},\ldots,s_i^{end}$.\;
\caption{{\sc Sync-Seq}\label{alg:sync_seq}}
\end{algorithm}

Alg.~\ref{alg:sync_seq} is essentially a loop (lines 5 -- 14) that computes wait-sets and notify-sets for each position of the runs of the robots to guarantee correctness in the field.
Initially, synchronization sequences are set so that the robots wait for and notify all other robots at every position of their runs (line 4).
At line 7 of Alg.~\ref{alg:sync_seq}, if $k$ is the first position of the runs, we do not modify this initial value of $s_i^k$.
This ensures that all robots start executing their runs in a synchronized way.
Also, if $k$ is the beginning of the suffix cycle, we again keep this initial value of $s_i^k$ so that all robots synchronize with each other globally at the beginning of each suffix cycle.
This lets us define a bound on optimality, \ie the value of the cost function \eqref{eqn:cost_function} observed in the field, as given in Prop.~\ref{prp:bound_on_opt}.
For all other positions of the runs, we try to shrink the wait-set and notify-set of each $s_i^k$ so that communication effort is reduced while we can still guarantee correctness in the field (lines 9 -- 14).
To this end, we consider each one of the robots in robot $i$'s $k^{th}$ wait-set, \ie $s_{i,wait}^k$, one by one.
After removing some robot $j$ from the $s_{i,wait}^k$, we also remove robot $i$ from $s_{j,notify}^k$ accordingly (lines 10--12).
Then, given the $\underline{\rho_i}$ and $\overline{\rho_i}$ values of the robots, we construct the TS $\BFW$ that generates all possible words $\tilde{\omega}_{team}$ that can be observed in the field due to the uncertainties in the traveling times.
Next, we check if the language of the product $\BFB_{\notltl\phi}\times\BFW$ is empty or not, where $\BFB_{\notltl\phi}$ is the \buchi automaton corresponding to the negation of the LTL formula $\phi$ (line 12).
If the language of the product is empty, then robot $i$ indeed does not need to wait for robot $j$ at the $k^{th}$ position of its run.
Thus, we keep the new values of $s_{i,wait}^k$ and $s_{j,notify}^k$.
Else, we restore $s_{i,wait}^k$ and notify-set of $s_{j,notify}^k$ to their previous values (lines 13--14) and proceed with the next robot in $s_{i,wait}^k$.
Once every robot in $s_{i,wait}^k$ is considered, we proceed with the next robot in the team, and eventually next position of the run.
Notice that, the synchronization sequences generated by Alg.~\ref{alg:sync_seq} are free from any dead-locks as lines 9 -- 10, and lines 13 -- 14 ensure that if some robot $i$ waits for robot $j$ at position $k$, then robot $j$ notifies robot $i$ at position $k$, \ie $j\in s_{i,wait}^k \iff i\in s_{j,notify}^k\;\forall\;i,j,k$.
As the synchronization sequences of the robots are in prefix-suffix form and the robots synchronize with each other globally at the beginning of each suffix (line 8), at line 15, we define the rest of each synchronization sequence as an infinite repetition of its first suffix-cycle that we have just generated.
For a prefix of length $p$ and a suffix cycle of length $s$, the complexity of Alg.~\ref{alg:sync_seq} is $O((p+s)m^2 L)$ where $m$ is the number of robots and $L$ is the complexity of constructing $W\times B_{\notltl\phi}$ and checking emptiness of its language at each iteration.
The synchronization protocol that the robots follow in the field is given in Alg.~\ref{alg:sync_run}.

\begin{algorithm} 
\DontPrintSemicolon 
\SetInd{0.5em}{0.5em}
\KwIn{The run $r_i$ and synchronization sequence $s_i$ of robot $i$ .}
\BlankLine
$k \leftarrow 0$.\;
\While{True} {
	{\bf Notify} all robots in $s_{i,notify}^k$.\;
	{\bf Wait} until notification messages of all robots in $s_{i,wait}^k$ are received.\;
	Make transition to $r_i^{k+1}$ after satisfying the propositions at $r_i^k$.\;
	$k \leftarrow k+1$.\;
}
\caption{{\sc Sync-Run}\label{alg:sync_run}}
\end{algorithm}

The following proposition slightly extends the result of Prop. 4.5 in \cite{AU-SLS-XCD-CB:2011} by considering unequal lower and upper deviation values.
\begin{proposition}
\label{prp:bound_on_opt}
Suppose that each robot's deviation values are bounded by $\underline{\rho}$ and $\overline{\rho}$ where $\overline{\rho}\geq1\geq \underline{\rho}>0$ (i.e., $\underline{\rho_i} \geq \underline{\rho}$ and $\overline{\rho_i}\leq\overline{\rho}$ for each robot $i$).
Let $J(\BBT^\opt)$ be the cost of the planned robot paths and let $J(\tilde\BBT^\opt)$ be the actual value of the cost observed during deployment.
Then, if the robots follow the protocol given in Alg.~\ref{alg:sync_run} the field value of the cost satisfies
\[
J(\tilde\BBT^\opt) \leq J(\BBT^\opt)\overline{\rho} + d_s(\overline{\rho}-\underline{\rho})
\]
where $d_s$ is the planned duration of the suffix cycle.
\end{proposition}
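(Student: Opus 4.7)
The plan is to exploit the one feature of Alg.~\ref{alg:sync_seq} that survives the local shrinking of wait- and notify-sets: the \emph{global} synchronization forced at the start of every suffix cycle (line~8). Between two consecutive global sync points, the field and planned timelines are tethered together, so the drift accrued over the life of the run can be bounded cycle by cycle rather than globally.

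The key technical ingredient is a per-cycle lemma: if the team is globally synchronized both in the plan (at time $t_0$) and in the field (at time $\tilde t_0$) at the start of a suffix cycle of planned duration $d_s$, then for any event with planned offset $p\in[0,d_s]$ into the cycle, the corresponding field offset $\tilde p$ satisfies
\[
\underline{\rho}\,p \;\leq\; \tilde p \;\leq\; \overline{\rho}\,p.
\]
The lower bound is immediate: the event is triggered by some robot reaching a state whose cumulative nominal edge weights from $t_0$ sum to $p$, and its actual traversal of those edges is at least $\underline{\rho}\,p$. The upper bound is proved by induction on the sequence of (possibly partial) synchronization points the team passes through inside the cycle. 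At a sync point of planned offset $p'\leq p$, every robot that synchronizes there has planned elapsed time exactly $p'$, so by the inductive hypothesis each such robot's field elapsed time is at most $\overline{\rho}\,p'$; the subsequent traversal from offset $p'$ to the event at offset $p$ uses edges of total nominal weight $p-p'$ and hence actual weight at most $\overline{\rho}(p-p')$, giving field elapsed time at most $\overline{\rho}\,p'+\overline{\rho}(p-p')=\overline{\rho}\,p$.

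With this lemma in hand, I would fix two consecutive field satisfactions $\tilde t_i<\tilde t_{i+1}$ of $\opt$ past the prefix and bound $\tilde t_{i+1}-\tilde t_i$. Let $\tilde\tau$ and $\tilde\tau'$ be the field start times of the suffix cycles containing $\tilde t_i$ and $\tilde t_{i+1}$, and let $p_i,p_{i+1}\in[0,d_s)$ be their planned offsets within those cycles, with $\Delta_i:=t_{i+1}-t_i$. If both events lie in the same cycle ($\tilde\tau=\tilde\tau'$) then $\Delta_i=p_{i+1}-p_i$ and the lemma gives $\tilde t_{i+1}-\tilde t_i\leq\overline{\rho}\,p_{i+1}-\underline{\rho}\,p_i=\overline{\rho}\,\Delta_i+(\overline{\rho}-\underline{\rho})p_i$. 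If they lie in adjacent cycles then $\tilde\tau'-\tilde\tau\leq\overline{\rho}\,d_s$ (apply the lemma at $p=d_s$) and $\Delta_i=d_s+p_{i+1}-p_i$, and the same algebra yields the identical right-hand side. In either case $p_i<d_s$ so $\tilde t_{i+1}-\tilde t_i\leq\overline{\rho}\,\Delta_i+(\overline{\rho}-\underline{\rho})\,d_s$. Taking $\limsup_{i\to\infty}$ and recalling $\limsup_i\Delta_i=J(\BBT^\opt)$ from \eqref{eqn:cost_function} delivers the claimed bound.

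The main obstacle is justifying the per-cycle upper bound $\tilde p\leq\overline{\rho}\,p$ in the presence of cascading waits: a naive argument might worry that each layer of synchronization introduces a fresh factor of $\overline{\rho}$. The resolution is to order the induction by planned time rather than by robot and to exploit the invariant $j\in s^k_{i,\mathrm{wait}}\iff i\in s^k_{j,\mathrm{notify}}$ maintained by Alg.~\ref{alg:sync_seq}: a waiter at planned offset $p'$ only waits for partners whose planned offset is also $p'$, so the inductive hypothesis applies uniformly at each sync point and the $\overline{\rho}$ factor does not compound.
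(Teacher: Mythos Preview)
The paper itself supplies no proof of this proposition: it simply remarks that the statement ``slightly extends the result of Prop.~4.5 in~\cite{AU-SLS-XCD-CB:2011} by considering unequal lower and upper deviation values'' and moves on. There is thus no in-paper argument against which to compare yours.

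Your architecture is right, and the per-cycle lemma is correctly argued; in particular, ordering the induction by planned position and using that in Alg.~\ref{alg:sync_run} a robot notifies \emph{before} it waits is exactly what keeps the $\overline{\rho}$ factor from compounding across cascaded waits. There is, however, one slip in the final assembly. You fix two \emph{field}-consecutive satisfactions $\tilde t_i<\tilde t_{i+1}$, let $\Delta_i$ be the difference of their \emph{planned} times, and then invoke $\limsup_i\Delta_i=J(\BBT^\opt)$. That identity is the definition of $J(\BBT^\opt)$ only when the two events are consecutive in the \emph{planned} ordering; within a suffix cycle the synchronization sequences need not preserve the relative order of $\opt$-satisfactions by different robots, so field-consecutive does not imply planned-consecutive and your $\Delta_i$ can exceed $J(\BBT^\opt)$. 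The repair is short: run your Case~1/Case~2 algebra for \emph{planned}-consecutive events instead (your inequalities never actually use $\tilde t_i<\tilde t_{i+1}$), obtaining $\tilde u_{i+1}-\tilde u_i\le\overline{\rho}\,\Delta_i+(\overline{\rho}-\underline{\rho})\,d_s$ with $\Delta_i$ now a genuine planned gap; then observe that any maximal gap $(\tilde a,\tilde b)$ in the sorted field sequence is straddled by some planned-consecutive pair---take $i^\star$ to be the largest planned index whose field time is at most $\tilde a$, so the field time of event $i^\star{+}1$ is at least $\tilde b$---and conclude.
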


~\\
\examplerevisited{1}{%
For the example we have shown throughout this section, we obtain the following individual optimal runs and synchronization sequences.
\begin{center}
\scalebox{0.9}{
\begin{tabular}{c | c c c c c c c}
$\BBT$& 0 & 2 & 3 & 4 & 5& 6 & \ldots \\
\hline
\hline
$r_1^\star$ & a               & b             & ba1             & a             & ab1           & b             & \ldots \\
$s_1$       & $(\{2\},\{2\})$ & $(\{\},\{\})$ & $(\{2\},\{2\})$ & $(\{\},\{\})$ & $(\{\},\{\})$ & $(\{\},\{\})$ & \ldots \\
$\CL_1(.)$  &                 & $\prop_1,\pi$ &                 &               &               & $\prop_1,\pi$ & \ldots \\
\hline
$r_2^\star$ & a               & b             & c               & b             & c             & b             & \ldots \\
$s_2$       & $(\{1\},\{1\})$ & $(\{\},\{\})$ & $(\{1\},\{1\})$ & $(\{\},\{\})$ & $(\{\},\{\})$ & $(\{\},\{\})$ & \ldots \\
$\CL_2(.)$  &                 & $\prop_2,\pi$ & $\prop_3$       & $\prop_2,\pi$ & $\prop_3$     & $\prop_2,\pi$ & \ldots
\end{tabular}
}
\end{center}
In a line corresponding to a synchronization sequence $s_i$, first and second elements of the tuple at position $k$ are $s_{i,wait}^k$ and $s_{i,notify}^k$, respectively.
}

~\\
We finally summarize our approach in Alg.~\ref{alg:multi_opt_run_robust} and show that this algorithm indeed solves Prob. \ref{prb:robust}.
We discuss the complexity of our approach in Rem.~2.
\begin{proposition}
\label{prp:final}
Alg.~\ref{alg:multi_opt_run_robust} solves Prob.~\ref{prb:robust}.
\end{proposition}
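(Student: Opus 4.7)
The plan is to verify the two requirements of Prob.~\ref{prb:robust} in turn: optimality of the planned cost $J(\BBT^\opt)$ under nominal weights, and satisfaction of $\phi$ by the field word $\tilde\omega_{team}$ under deviations.

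For optimality, I would argue as follows. Alg.~\ref{alg:multi_opt_run_robust} first builds the team transition system $\BFT$ via Alg.~\ref{alg:construct_tts}, and then invokes \optrun on $\BFT$, which by \cite{SLS-JT-CB-DR:2011} returns a run $r^\star_{team}$ in prefix-suffix form that minimizes \eqref{eqn:cost_function} among all runs of $\BFT$ whose generated word satisfies $\phi$. By Prop.~\ref{prp:tts_sim_ra}, $\BFT$ and $\BFR$ are bisimulation equivalent, so they generate the same set of words with the same time sequences; since $\BFR$ faithfully encodes the joint asynchronous behaviour of the team, $r^\star_{team}$ therefore attains the minimum of \eqref{eqn:cost_function} over all feasible team behaviours. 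By Def.~\ref{def:projection_of_runs} and the discussion that follows it, the projected individual runs $r^\star_i$ preserve the team word $\omega_{team}$ and time sequence $\BBT$ exactly, so the $\BBT^\opt$ induced by the $r_i^\star$ achieves the optimum of \eqref{eqn:cost_function}.

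For field correctness, I would appeal to a loop invariant of Alg.~\ref{alg:sync_seq}: after initialization every robot waits for and notifies every other robot at every position, which trivially forces $\tilde\omega_{team}=\omega_{team}$ and hence $\tilde\omega_{team}\models\phi$. An entry is then removed from a wait/notify set only when the language of $\BFB_{\notltl\phi}\times\BFW$ is empty, where $\BFW$ is constructed to generate every word $\tilde\omega_{team}$ consistent with the current partial synchronization and the deviation bounds $\underline{\rho_i},\overline{\rho_i}$. Emptiness of this product language is precisely the statement that no field trajectory permitted by the current $s_i$ violates $\phi$, so the invariant ``every possible $\tilde\omega_{team}$ satisfies $\phi$'' is preserved throughout the loop. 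Since the robots execute Alg.~\ref{alg:sync_run} using the final $s_i$, every word produced in the field satisfies $\phi$, which establishes the second requirement of Prob.~\ref{prb:robust}.

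The main obstacle will be justifying the construction of $\BFW$ inside Alg.~\ref{alg:sync_seq}, namely that it genuinely over-approximates the set of field words under both the uncertainty in traveling times and the current partial choice of wait/notify sets; once this is granted, the two arguments above combine immediately. I would also remark that the global synchronization enforced at the initial state and at the boundary between prefix and suffix cycle is what makes the periodic field behaviour well-defined, and together with Prop.~\ref{prp:bound_on_opt} this supplies the promised bound on the deviation of $J(\tilde\BBT^\opt)$ from the nominal optimum.
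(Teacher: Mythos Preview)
Your proposal is correct and follows essentially the same two-part structure as the paper's own proof: optimality of the nominal cost via \optrun on $\BFT$ (citing \cite{SLS-JT-CB-DR:2011}), and field correctness via the synchronization sequences produced by Alg.~\ref{alg:sync_seq}. Your argument is considerably more detailed than the paper's---in particular, the loop-invariant justification for {\sc Sync-Seq} and the appeal to bisimulation via Prop.~\ref{prp:tts_sim_ra} are made explicit where the paper leaves them implicit---but the underlying approach is the same.
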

\begin{proof}
Note that Alg.~\ref{alg:multi_opt_run_robust} combines all steps outlined in this section.
The planned word $\omega_{team}$ generated by the entire team satisfies $\phi$, and minimizes \eqref{eqn:cost_function}, as shown in \cite{SLS-JT-CB-DR:2011}.
The synchronization sequences guarantee correctness in the field by ensuring that the $\tilde{\omega}_{team}$ generated in the field never violates $\phi$ for given deviation values.
Therefore, $\{r^{\star}_1,\ldots, r^{\star}_{m}\}$ and $\{s_1,\ldots,s_m\}$ as obtained from Alg.~\ref{alg:multi_opt_run_robust} is a solution to Prob.~\ref{prb:robust}.
\qedsymbol
\end{proof}

\myremark{2}{Computational Complexity}{
The main drawback of our approach is its computational complexity, which is exponential in the number of robots (due to generation of the team transition system and the synchronization sequences) and in the length of the LTL formula (due to the conversion to a \buchi automaton).
This cost, however, is justified by the globally optimal runs that our approach computes, and in practice, we can solve fairly large problems.
}
~\\

\begin{algorithm} 
\DontPrintSemicolon 
\SetInd{0.5em}{0.5em}
\KwIn{$m$ $\BFT_i$'s, corresponding deviation values, and a global LTL specification $\phi$ of the form \eqref{eqn:general_formula}.}
\KwOut{A set of optimal runs $\{r^{\star}_1,\ldots, r^{\star}_{m}\}$ that satisfies $\phi$ and minimizes \eqref{eqn:cost_function}, a set of synchronization sequences $\{s_1,\ldots,s_m\}$ that guarantees correctness in the field, and the bound on the performance of the team in the field.}
\BlankLine
Construct the team transition system $\BFT$ using Alg.~\ref{alg:construct_tts}.\;
Find an optimal run $r^\star_{team}$ on $\BFT$ using \optrun~\cite{SLS-JT-CB-DR:2011}.\;
Insert new traveling states to TSs according to $r^\star_{team}$ (See. Sec.~\ref{sec:sol.run}).\;
Obtain individual runs $\{r^\star_1,\ldots,r^\star_m\}$ using Def.~\ref{def:projection_of_runs}.\;
Generate synchronization sequences $\{s_1,\ldots,s_m\}$ using {\sc Sync-Seq} (Alg.~\ref{alg:sync_seq}).\;
Find the bound on optimality as given in Prop.~\ref{prp:bound_on_opt}.\;
\caption{{\sc Robust-Multi-Robot-Optimal-Run}\label{alg:multi_opt_run_robust}}
\end{algorithm}

\section{Implementation and Case-Study}
\label{sec:exp}
We implemented Alg.~\ref{alg:multi_opt_run_robust} as a python module (available at {\footnotesize \url{http://hyness.bu.edu/lomap/}}) and used it to plan optimal satisfying paths and synchronization sequences for the scenario that we consider in this section.
Our experimental platform (Fig.~\ref{fig:road_network}) is a road network comprising roads, intersections and task locations.
Fig.~\ref{fig:model} illustrates the model that captures the motion of the robots on this platform, where 1 time unit corresponds to 1.574 seconds.

\begin{figure}
\subfigure[]{
	\includegraphics[scale=0.52]{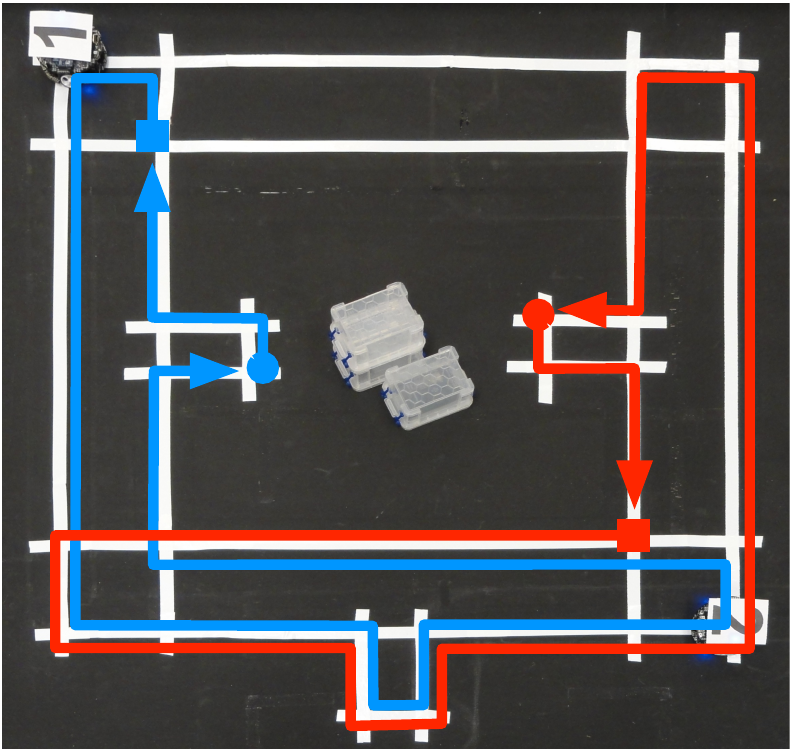}
	\label{fig:road_network}
}%
\subfigure[]{
	\includegraphics[scale=0.62]{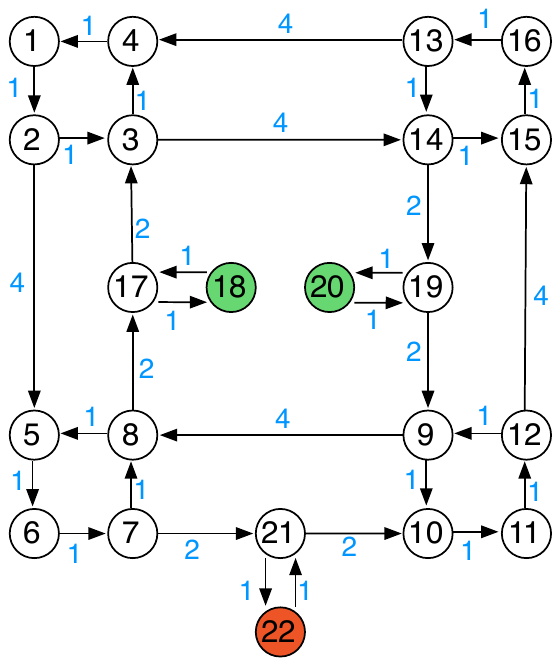}
	\label{fig:model}
}%
\sidecaption
\caption{Fig. (a) shows our experimental platform. The squares and the circles on the trajectories of the robots represent the beginning of the suffix cycle and sync. points, respectively. Fig. (b) illustrates the TS that models the robots. The green and red regions are data gather and upload locations, respectively.}
\label{fig:road_network_and_model}
\end{figure}

In our experiments, we consider a persistent surveillance task involving two robots with deviation values $\overline{\rho_1} = \overline{\rho_2} = 1.05$ and $\underline{\rho_1} = \underline{\rho_2} = 0.95$.
The building in the middle of the platform in Fig.~\ref{fig:road_network} is our surveillance target.
We define the set of propositions $\Pi=\{\mathtt{R1Gather18},$ $\mathtt{R1Gather20},$ $\mathtt{R2Gather18},$ $\mathtt{R2Gather20},$ $\mathtt{R1Gather},$ $\mathtt{R2Gather},$ $\mathtt{R1Upload},$ $\mathtt{R2Upload},\;$ $\mathtt{Gather}\}$ and assign them as $\CL_1(18)$ $= \{\mathtt{R1Gather18},\;$ $\mathtt{R1Gather},\;$ $\mathtt{Gather}\}$, $\CL_2(18)=\{\mathtt{R2Gather18},$ $\mathtt{R2Gather},$ $\mathtt{Gather}\}$, $\CL_1(20)=\{\mathtt{R1Gather20},$ $\mathtt{R1Gather},$ $\mathtt{Gather}\}$, $\CL_2(20)=\{\mathtt{R2Gather20},$ $\mathtt{R2Gather},$ $\mathtt{Gather}\}$, $\CL_1(22)=\{\mathtt{R1Upload}\}$ and $\CL_2(22)=\{\mathtt{R2Upload}\}$.
The main objective is to keep gathering data while minimizing the maximum time between successive gathers.
We require the robots to gather data in a synchronous manner at data gather locations 18 and 20 while ensuring that they do not gather data at the same place at the same time.
We also require the robots to upload their data at upload location 22 before their next data gather.
We express these requirements in LTL in the form of \eqref{eqn:general_formula} as 
\[
\begin{aligned}
    \phi = & \Always\,(\mathtt{R1gather} \Implies \Next(\notltl\mathtt{R1gather}\;\Until\;\mathtt{R1upload})) \andltl \Always\,(\mathtt{R2gather} \Implies\\
    &\Next(\notltl\mathtt{R2gather}\;\Until\;\mathtt{R2upload})) \andltl \Always\,((\mathtt{R1Gather18}\Implies \mathtt{R2Gather20}) \andltl\\
    &(\mathtt{R1gather20} \Implies \mathtt{R2gather18}) \andltl (\mathtt{R2gather18} \Implies \mathtt{R1gather20}) \andltl\\
    &(\mathtt{R2gather20} \Implies \mathtt{R1gather18})) \andltl \Always\Event\,\mathtt{Gather},
\end{aligned}
\]
where $\mathtt{Gather}$ is set as the optimizing proposition.

Fig.~\ref{fig:road_network} illustrates the solution we obtain using our algorithm.
Using an iMac i5 quad-core computer, it took our implementation 10 minutes to compute the optimal runs and synchronization sequences of the robots.
The planned value of the cost function was 44.072 seconds (28 time units) with an upper bound of 50.683 seconds (32.2 time units) seconds.
We deployed our robots in our experimental platform to demonstrate and verify the result.
The maximum time between any two successive data uploads was measured to be 48 seconds.
The video available at {\footnotesize\url{http://hyness.bu.edu/lomap/dars2012.mov}} demonstrates the execution of this run by the robots.

\section{Conclusion}
\label{sec:conc}
In this paper we presented an automated method for planning optimal paths for a robotic team subject to temporal logic constraints expressed in LTL.
The robots that we consider have bounded non-deterministic traveling times characterized by robot specific deviation values.
We first compute a set of optimal satisfying paths for the members of the team.
Then, leveraging the communication capabilities of the robots, we also compute a set of synchronization sequences for each robot to ensure that the LTL formula is never violated during deployment.
Our experiments show that our method has practical value in scenarios where the traveling times of the robots during deployment deviate from those used in planning.

\begin{acknowledgement}
This work was supported in part by ONR MURI N00014-09-1051, NSF CNS-0834260, and NSERC.
\end{acknowledgement}

\bibliographystyle{spmpsci}
\bibliography{references}
\end{document}